  \providecommand\BibTeX{{%
    \normalfont B\kern-0.5em{\scshape i\kern-0.25em b}\kern-0.8em\TeX}}}
\begin{document}
\fancyhead{}
\title{Identity-Aware Attribute Recognition via Real-Time Distributed Inference in Mobile Edge Clouds}


\author{Zichuan Xu}
\affiliation{
  \institution{The Key Laboratory for Ubiquitous Network and Service Software of Liaoning Province, School of Software, Dalian University of Technology }
  \city{Dalian}
  \state{China}
  \postcode{116620}
}
\email{z.xu@dlut.edu.cn}

\author{Jiangkai Wu}
\affiliation{
  \institution{School of Software, Dalian University of Technology }
  \city{Dalian}
  \state{China}
  \postcode{116620}
}
\email{wujiangkai98@gmail.com}

\author{Qiufen Xia}
\authornote{Corresponding author.  This work is partially supported by the National Natural Science Foundation of China (Grant No. 61802048 and 61802047), the fundamental research funds for the central universities in China (Grant No. DUT19RC(4)035), DUT-RU Co-Research Center of Advanced ICT for Active Life, and the ``Xinghai Scholar Program'' in Dalian University of Technology, China. }
\affiliation{%
  \institution{The Key Laboratory for Ubiquitous Network and Service Software of Liaoning Province, International School of Information Science and Engineering, Dalian University of Technology, Dalian, China}
  \postcode{116620}
}
\email{qiufenxia@dlut.edu.cn}

\author{Pan Zhou}
\affiliation{%
  \institution{The Hubei Engineering Research Center on Big Data Security,
School of Cyber Science and Engineering, Huazhong University of Science and Technology, Wuhan, China}
  \postcode{430074}
}
\email{panzhou@hust.edu.cn}

\author{Jiankang Ren}
\affiliation{%
  \institution{School of Computer Science, Dalian University of Technology}
  \city{Dalian}
  \state{China}
  \postcode{116024}
}
\email{rjk@dlut.edu.cn}

\author{Huizhi Liang}
\affiliation{%
  \institution{Department of Computer Science, University of Reading}
  \streetaddress{Polly Vacher Building, Whiteknights}
  \city{Reading}
  \state{UK}
  \postcode{RG6 6AY}
}
\email{huizhi.liang@reading.ac.uk}



\begin{abstract}
With the development of deep learning technologies, attribute recognition and person re-identification (re-ID) have attracted extensive attention and achieved continuous improvement via executing computing-intensive deep neural networks in cloud datacenters. However, the datacenter deployment cannot meet the real-time requirement of attribute recognition and person re-ID, due to the prohibitive delay of backhaul networks and large data transmissions from cameras to datacenters. A feasible solution thus is to employ mobile edge clouds (MEC) within the proximity of cameras and enable distributed inference. 

In this paper, we design novel models for pedestrian attribute recognition with re-ID in an MEC-enabled camera monitoring system. We also investigate the problem of distributed inference in the MEC-enabled camera network. To this end, we first propose a novel inference framework with a set of distributed modules, by jointly considering the attribute recognition and person re-ID. 
We then devise a learning-based algorithm for the distributions of the modules of the proposed distributed inference framework, considering the dynamic MEC-enabled camera network with uncertainties. We finally evaluate the performance of the proposed algorithm by both simulations with real datasets and system implementation in a real testbed. Evaluation results show that the performance of the proposed algorithm with distributed inference framework is promising, by reaching the accuracies of attribute recognition and  person  identification  up to 92.9\% and  96.6\% respectively, and significantly reducing the inference delay by at least 40.6\%  compared with existing methods.
\end{abstract}

\begin{CCSXML}
<ccs2012>
   <concept>
       <concept_id>10010147.10010919</concept_id>
       <concept_desc>Computing methodologies~Distributed computing methodologies</concept_desc>
       <concept_significance>500</concept_significance>
       </concept>
   <concept>
       <concept_id>10003033.10003068.10003069</concept_id>
       <concept_desc>Networks~Data path algorithms</concept_desc>
       <concept_significance>500</concept_significance>
       </concept>
 </ccs2012>
\end{CCSXML}

\ccsdesc[500]{Computing methodologies~Distributed computing methodologies}
\ccsdesc[500]{Networks~Data path algorithms}

\keywords{Attribute Recognition, Distributed Inference, Online Learning, Mobile Edge Computing}



\maketitle

\section{Introduction}
Person re-identification (re-ID) and attribute recognition both are employed in critical applications of surveillance. Person re-ID is to identify pedestrians according to views from different video surveillance or spans a different time using a single camera~\cite{AJM15}, the main concern is whether the two pedestrian images taken by different cameras are the same person. While attribute recognition is to distinguish the presence of a set of attributes from an image, such as hairstyle, wearing, carrying, gender, age, etc. Person re-ID and attribute recognition have a common target to learn pedestrian features, and both rely on deep learning techniques. 
So combining the two techniques together makes sense for establishing an intelligent multi-camera  monitoring system. Accuracy and response time are crucial factors in intelligent multi-camera monitoring systems. 

Deep learning models are complex and resource-consuming, the resource-limited devices on the side of users are hard to compute their inference results. 
These deep learning services thus are mostly deployed in cloud datacenters with abundant GPU resources. However, all cameras continuously transmit their surveillance data to datacenters for processing. 
Such centralized architecture based on remote datacenters cannot satisfy the delay requirements of real-time analytics based on deep learning models, 
due to the congested links and severe transmission latency in cloud backhaul networks. 
With the development of 5G technique, mobile edge computing (MEC) consisting of multiple computing nodes 
emerges as an attractive and promising paradigm to host computation tasks as close as possible to the data sources and end users. In an MEC-enabled camera network, the inference tasks of person re-ID and attribute recognition can thus be offloaded to their nearby computing nodes, 
e.g., cloudlets or base stations with artificial intelligence accelerators, thereby reducing transmission delay significantly. 

Several works have been proposed to study the problem of person re-ID and attribute recognition~\cite{li2015multi, liu2018localization, li2018pose, sarafianos2018deep, liu2017hydraplus, zhao2018grouping, zhao2019recurrent, liu2018sequence,DBLP:journals/corr/abs-1805-02336,xie2019progressive,liu2019an,xiang2018homocentric,yang2018local,wang2018deep,DBLP:journals/corr/abs-1903-06325}. 
However, these existing algorithms cannot be simply combined to build a real-time system and cannot be directly employed in MEC-enabled camera networks. The reasons are as follows. 
First, conventional re-ID algorithm is used for offline image retrieval, so it needs to be redesigned for real-time identification. 
Existing attribute recognition algorithms typically ignore the identity-association of geographically distributed cameras. 
 So the re-ID and attribute recognition need to be jointly performed in a distributed manner to realize cross-camera attribute recognition.
Second, the distributed inference framework consisting of multiple modules has to be effectively deployed into the MEC-enabled camera network; otherwise, it may incur prohibitive inter-module communication delays or processing delays. 
Therefore, a finer grained integration of distributed inference framework and task assignment policy in MECs are urgently needed to accelerate the smart surveillance system. 

Realizing real-time attribute recognition and person re-ID in an MEC-enabled camera network presents major challenges. 
First, existing methods did not elaborate cross-camera attribute recognition, 
so how to identify the new coming pedestrians and fuse the recognized attributes in the level of identity are challenging. 
Second, conventional methods usually process person re-ID and attribute recognition tasks separately. However, the two tasks have a common target of feature learning and benefit each other. Designing a Convolutional Neural Network (CNN) model that can accomplish the two tasks with high accuracy is challenge. Third, inference tasks for person re-ID and attribute recognition are computationally intensive, incurring large computation workload to implement. 
It will lead to prohibitive computation latency if executing the inference only on surveillance cameras, due to the constrained computing capability of the cameras.  
Also, surveillance videos contain redundant data, which causes bandwidth waste and tremendous transmission latency if sending all videos to edge servers to process. 
Therefore, how to design an inference framework that can fully make use of the edge servers to minimize processing and transmission latencies is challenging. 
Fourth, the MEC-enabled camera network has many network uncertainties, such as the processing and transmission latencies of cloudlets and base stations of the network. 
How to distribute the dynamic inference framework to computing nodes while incorporating network uncertainties is another challenge. 

In this paper, we focus on designing the distributed inference framework for joint person re-ID and attribute recognition, and explore a non-trivial interplay between the inference framework and network uncertainties of an MEC-enabled camera network, with an aim to maximize the accuracy of the cross-camera attribute recognition and enable real-time inference. 

To the best of our knowledge, we are the first to consider cross-camera attribute recognition and person re-ID in an MEC-enabled camera network. 
The major contributions of this paper include  
\begin{itemize}
\item We formulate a problem of joint attribute recognition and person re-ID in an MEC-enabled camera network.
\item We design a distributed inference framework with a set of carefully designed modules that enables module distribution in a network. 
\item We propose a CNN-based model that trains attribute recognition and re-ID simultaneously. The accuracy is greatly improved while the size of the model is reduced by 55.2\%. 
\item We investigate the problem of efficiently and effectively distributing its modules to computing nodes in the MEC-enabled camera network, by considering various uncertainties of the network. 
\item We evaluate the performance of the proposed framework using real datasets in a real test-bed. 
Results demonstrate that the accuracies of attribute recognition and re-ID are up to 92.9\% and 96.6\% respectively, while the latency of distributed inference is reduced by at least 40.6\%. 
\end{itemize}


\section{Related work}\label{sec02} 
Upon the success of deep learning on automatic feature extraction, CNN-based methods are dominating the pedestrian attribute recognition~\cite{li2015multi, liu2018localization, li2018pose, sarafianos2018deep, liu2017hydraplus, zhao2018grouping, zhao2019recurrent, liu2018sequence} and re-ID~\cite{DBLP:journals/corr/abs-1805-02336,xie2019progressive,liu2019an,xiang2018homocentric,yang2018local,wang2018deep,DBLP:journals/corr/abs-1903-06325}. 
Person re-ID and attribute recognition tasks can share the same target of feature learning. Several deep learning approaches are thus proposed, which can execute both tasks in a single model~\cite{su2018multi, schumann2017person, lin2019improving}. 
For example, 
Lin et al.~\cite{lin2019improving} manually annotated attribute labels for two large-scale re-ID datasets: Market-1501 and DukeMTMC-reID, to learn the feature representation and attribute classifiers at the same time. 
Almost none of the listed references focused on the design of distributed inference in MEC environments. 

Some studies focused on proposing distributed CNN models. 
For example, Kang et al.~\cite{kang2017neurosurgeon} developed a regression model to predict the processing latency and energy consumption for each layer with the configuration information. 
Study in~\cite{jeong2018ionn} proposed a heuristic that divided a DNN into several partitions and distributed them to the edge servers incrementally. 
Teerapittayanon et al.~\cite{teerapittayanon2017distributed} mapped partitions of a DNN onto distributed end devices, edge servers and the cloud, by jointly training to consider the trade-off of transmission latency and model accuracy. 
However, the distribution of partitioned parts into different servers and the network uncertainties are not considered~\cite{kang2017neurosurgeon}, or the design and training of DNN model are neglected~\cite{jeong2018ionn}, or only general DNN models are adopted~\cite{teerapittayanon2017distributed}. 
Edge computing emerges as a new deployment paradigm of low-latency services for IoT or mobile applications. In the bottom layer, many general algorithms were proposed to optimize the Qos from the perspective of NFV~\cite{10.1145/3387705,xu2020nfv-enabled,8502709,xu2019nfv-enabled,Ren2020Efficient} or caching~\cite{Xu2020Learningforexception,Xu2020CacheOrNot,Xu2020CollaborateOrSeparate}. In the application layer, some researchers focused on video processing on edges~\cite{elgamal2018droplet:,wang2018bandwidth-efficient,8057318,elgamal2020serdab,hung2018videoedge:}, which are more similar to our scenario. However, none of the listed video processing references consider the design of a novel NN architecture or ignored the latency uncertainties. 

\section{Preliminary}\label{sec03}
In this section, we first introduce the system model, notations and notions. We then define the problems precisely.

\subsection{An MEC-enabled surveillance system}
We consider a surveillance environment with multiple cameras interconnected by an MEC-enabled network, as illustrated in Figure~\ref{topology}. 
Let $G = (C\cup V; E)$ be the MEC-enabled camera network, with a set $C$ of surveillance cameras, a set $V$ of edge servers that can implement the inference of deep learning model, and a set $E$ of links (wired or wireless) connecting cameras and edge servers. 
Denote by $c_i \in C$ a camera. 
Transferring video stream data out of each camera $c_i$ through links in $E$ consumes network bandwidth resources and incurs communication latencies.  
The latencies of network links depend on many factors, such as the congestion level of the link. Such latencies thus usually are uncertain and cannot be obtained in advance.
The computing resource in each edge server $v\in V$ is used to perform real-time person re-ID and attribute recognition, 
thereby incurring processing delay. 

\begin{figure}[htbp]
\centerline{\includegraphics[scale=0.22]{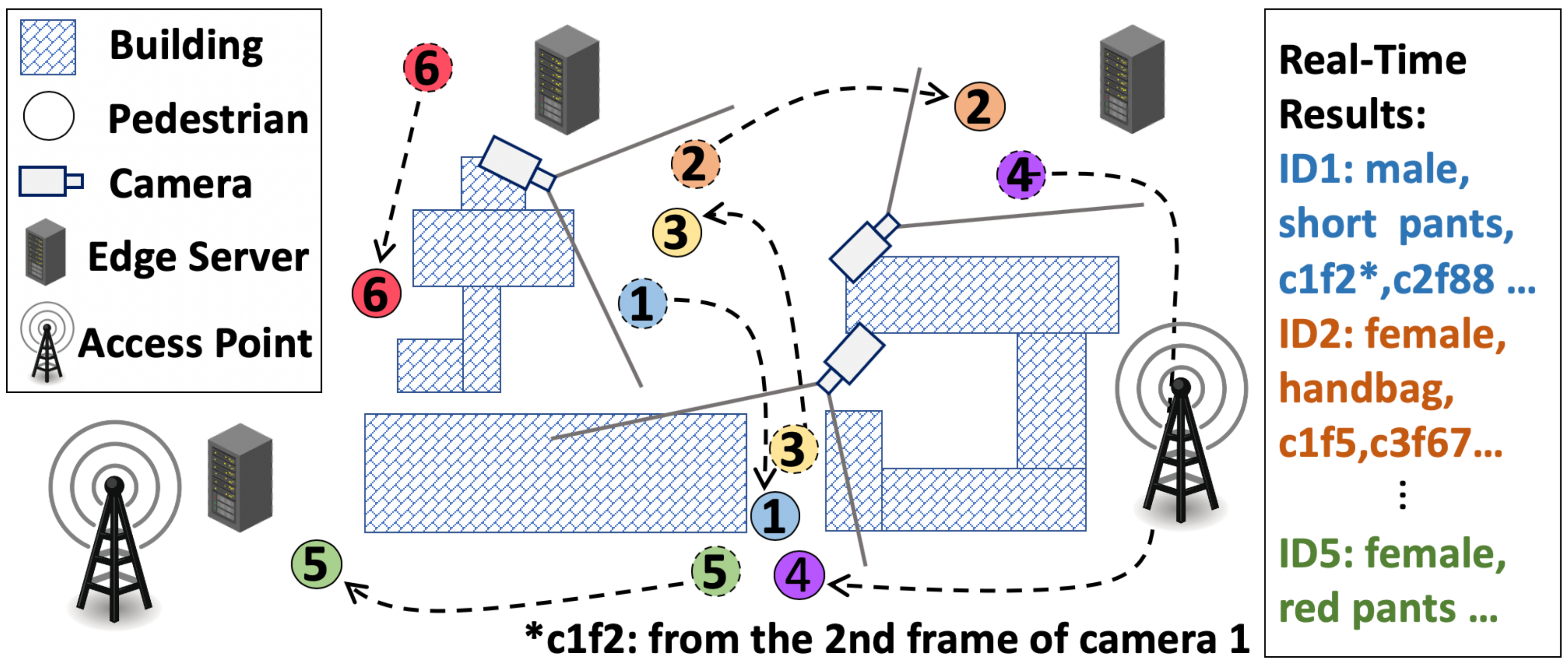}}
\vspace{-3mm}
\caption{An example of the  MEC-enabled camera network for joint attribute recognition and person re-ID.}
\vspace{-6mm}
\label{topology}
\end{figure}

\subsection{Uncertainties of latencies and inference requests}
Processing video streams in an MEC-enabled network incurs latency, which includes the transmission delay by transferring videos or features from cameras to edge servers and the processing delay in edge servers. Such latencies in different edge servers and communication links in the MEC vary significantly from time to time. Notice that we did not consider the transmission delay by transferring analysis results from edge servers to end devices, since the size of analysis results is typical small.  

Let $r_j$ be an inference request for processing in the MEC-enabled network. Each $r_j$ carries an amount of video data for processing. 
The delay experienced by $r_j$ for processing its data stream in an edge server depends on both its data rate and the workload of the server during time slot $t$. It thus varies in different time slots and is usually not known in advance. Since the inference request is scheduled per time slot, we assume that the delay of experienced by each request does not change during time slot $t$, and can be obtained at the very beginning of time slot $t$. 
Similarly, the latency of transmitting the video stream of inference request $r_j$ in each link $e\in E$ varies time by time and is uncertain when $r_j$ arrives into the system. 



\subsection{Problem definition}
Given an MEC-enabled camera network $G = (C\cup V; E)$, a set of online inference requests $R(t)$ at a beginning of each time slot $t$, the {\it distributed inference problem for real-time attribute recognition and person re-ID} is to partition an inference request into different modules and distributing the modules to edge servers, such that the accuracies of person re-ID and attribute recognition are maximized while the latency of inference is minimized, subject to the resource constraints of edge servers. 

\section{A Distributed Inference Framework}\label{sec04}

\subsection{The framework design}
We observe that there is a lot of redundant information in the original video streams collected by cameras in each request $r_j$. 
To avoid the long delay incurred of transmitting such information, we use a pedestrian detector in each camera to extract effective images of pedestrians from the video stream. 
In addition, to recognize the attributes and id of a pedestrian, conventional methods train two separate models for attribute recognition and re-ID, respectively. 
This however may not be efficient in an MEC-enabled network, because two CNN models need to 
be executed and stored in each edge server, thereby introducing additional processing delays and occupying more storage resource. 
Motivated by this, we design a single CNN model for both person re-ID and attribute recognition. 

Our basic idea is to partition the inference framework to different modules, with a minimum amount of inter-module communications. These modules can be distributed to different edge servers, realizing distributed inference. Besides, multiple modules can be run in a server. Since the modules need to be stored in the edge servers, the size of the trained model of the modules is also minimized. 

In pedestrian re-ID, calculating feature similarity or distance between the query image 
and all the pedestrian images in gallery is time consuming, where the gallery is a set of photos held by the system. 
With the continuous running of the system, the gallery can be very large, such that the edge server with the gallery becomes the bottleneck of the system. 
To accelerate this process, we consider the distributed gallery for parallel computing of pedestrian re-ID. Specifically, we split a single centralized gallery into multiple distributed galleries. We store labeled features of pedestrian images in the gallery, instead of storing images in the conventional method. After obtaining the attributes of a person, we need to look-up the whole distributed gallery group to identify the person. 
The final analysis result is shown in Figure~\ref{topology}. The system records an instance for each detected person, which contains the id, the attributes and the frame information. The frame information describes when and where the person was detected.

As shown in Figure~\ref{inference}, the proposed distributed inference framework consists of the following key modules: 
\begin{figure}[t]
\centerline{\includegraphics[scale=0.2]{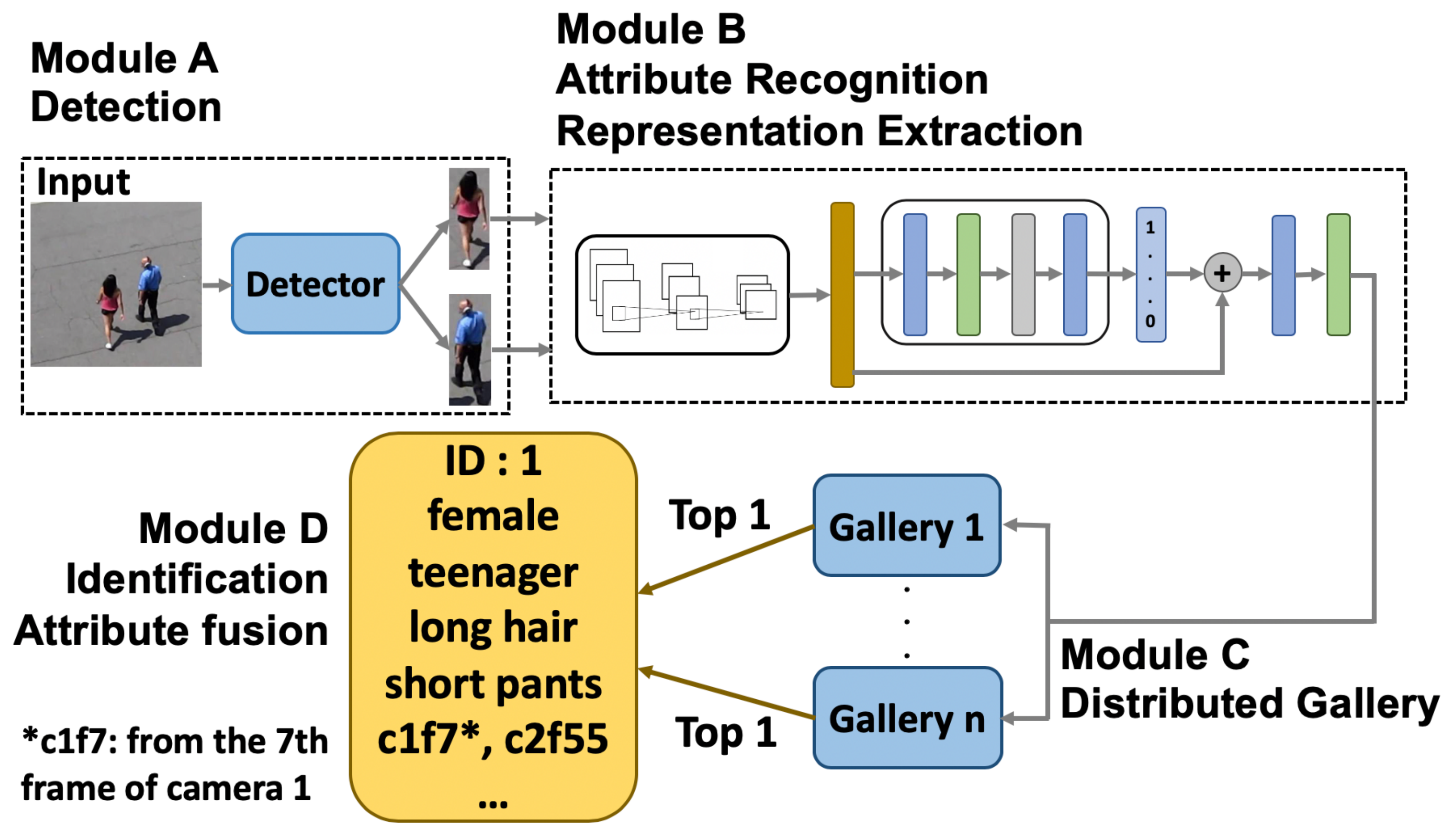}}
\vspace{-3mm}
\caption{The distributed inference framework for joint attributes recognition and person re-ID.}
\vspace{-5mm}
\label{inference}
\end{figure}

{\bf Module A:} This module is called pedestrian detector that runs in each camera $c_i$. The pedestrian detector detects pedestrian images from real-time video stream, and transmits the obtained images as the query persons to the successor modules of the proposed inference framework. 
Note that the frame information is always attached to the inter-module transmission, although we will not explicitly indicate it due to its negligible size. 

{\bf Module B:} This module receives the pedestrian images from the pedestrian detector of each camera $c_i$. It then recognizes the attributes of the person by employing CNN model. Eventually, the output of attribute prediction is concatenated with the extracted feature. The subsequent two layers are the full connection layer and the batch normalization layer. Finally, the output feature will be sent to each instance of Module C for re-ID. 

{\bf Module C:} 
There are multiple instances of Module C in the framework. Each instance is distributed to different edge servers and holds a unique distributed gallery. The $gallery_{i}$ stores the features of pedestrians from $c_{i}$. After receiving the query feature from Module B, all instances of Module C calculates the similarity between the query and all the features stored in parallel. The similarity will be sorted in descending order. Afterwards,  the maximum similarity with its corresponding ID label will be sent to Module D.

{\bf Module D:} This module receives the local maximum labeled similarity from every instance of Module C, and sorts them in descending order. It then obtains the global maximum similarity. We define a threshold $\varepsilon$, which is discussed statistically in Section.~\ref{sec6.3}. If the global maximum similarity is greater than $\varepsilon$, it shows that the person has been detected by the system, and the ID of the query person is the one with the global maximum similarity. 
Then, the new attributes need to be fused with the old attributes in the instance by the method based on exponentially weighted average. 
Otherwise, the query person has not been detected by any camera. In this case, a new instance for the person needs to created, by assigning the person with a unique ID and storing its attributes. 
In either case, the frame information will be added and the id result needs to be sent back for gallery update. 

\subsection{Model training}
Existing joint training models of attribute recognition and re-ID usually consider the recognition of each attribute as a single task, by using separate classifier for each attribute~\cite{lin2019improving}. 
This however contains many redundant information and leads to large training models that cannot be effectively distributed in the MEC-enabled camera network. 
Instead, we consider all the attributes at the same time and the relationship among attributes are learnt simultaneously. Partially similar to~\cite{lin2019improving}, we adopt a multi-task network in the model training, which can learn an identity classifier and an attribute classifier at the same time. Regarded as auxiliary cues, the output of attribute prediction is concatenated with the feature extracted by the CNN. 
The concatenated feature is the input of the identity classifier, as shown in Figure~\ref{training}. 
For re-ID, the loss function $Loss_{reid}$ is a categorial cross entropy over identity label, i.e., 
\begin{equation}
Loss_{reid}=-\frac{1}{N}\sum\nolimits^{N}_{i=1}\sum\nolimits^{K}_{k=0} y_{ik}\log(\hat{p}_{ik})\label{eq1},
\end{equation}
where $\hat{p}_{ik}$ is the predicted probability that $sample_{i}$ is the person $k$. $y_{ik}$ is the ground truth label indicates whether the $sample_{i}$ is the person $k$ or not. $N$ is the number of training samples. $K$ represents the number of identities in the training set.

To minimize the size and improve the accuracy of the trained model, we reduce the redundant and inefficiency in attribute recognition, by considering all the attributes at the same time and learning the relationship among attributes simultaneously~\cite{li2015multi}. 
The loss function $Loss_{attr}$ is a sigmoid cross entropy loss function which learns all the attributes jointly:
\begin{equation}
Loss_{attr}= -\frac{1}{N}\sum\nolimits^{N}_{i=1}\sum\nolimits^{M}_{j=0}w_{j}( y_{ij}\log(\hat{p}_{ij})+(1-y_{ij})\log(1-\hat{p}_{ij})), \nonumber
\end{equation}
with a loss weight for attribute $j$ to deal with the unbalancedness of attribute distributions, i.e., 
\begin{equation}
w_{j}=exp(-\rho _{j}/\sigma ^{2})\label{eq4},
\end{equation}
where $\hat{p}_{ij}$ is the predicted probability for the attribute $j$ of $sample_{i}$. $y_{ij}$ is the ground truth label indicating whether the $sample_{i}$ has the attribute $j$ or not. 
$\rho_{j}$ is the ratio of attribute $j$ in the training set. $\sigma$ is a hyper-parameter. $M$ is the number of attributes.
We define the overall loss function:
\begin{equation}
Loss = \lambda Loss_{reid}+ ({(1-\lambda)}/{M})Loss_{attr}\label{eq5},
\end{equation}
where $\lambda$ is a hyper parameter to balance the two sub losses, which is set to 0.5 in our  experiments. 

\begin{figure}
\centerline{\includegraphics[scale=0.188]{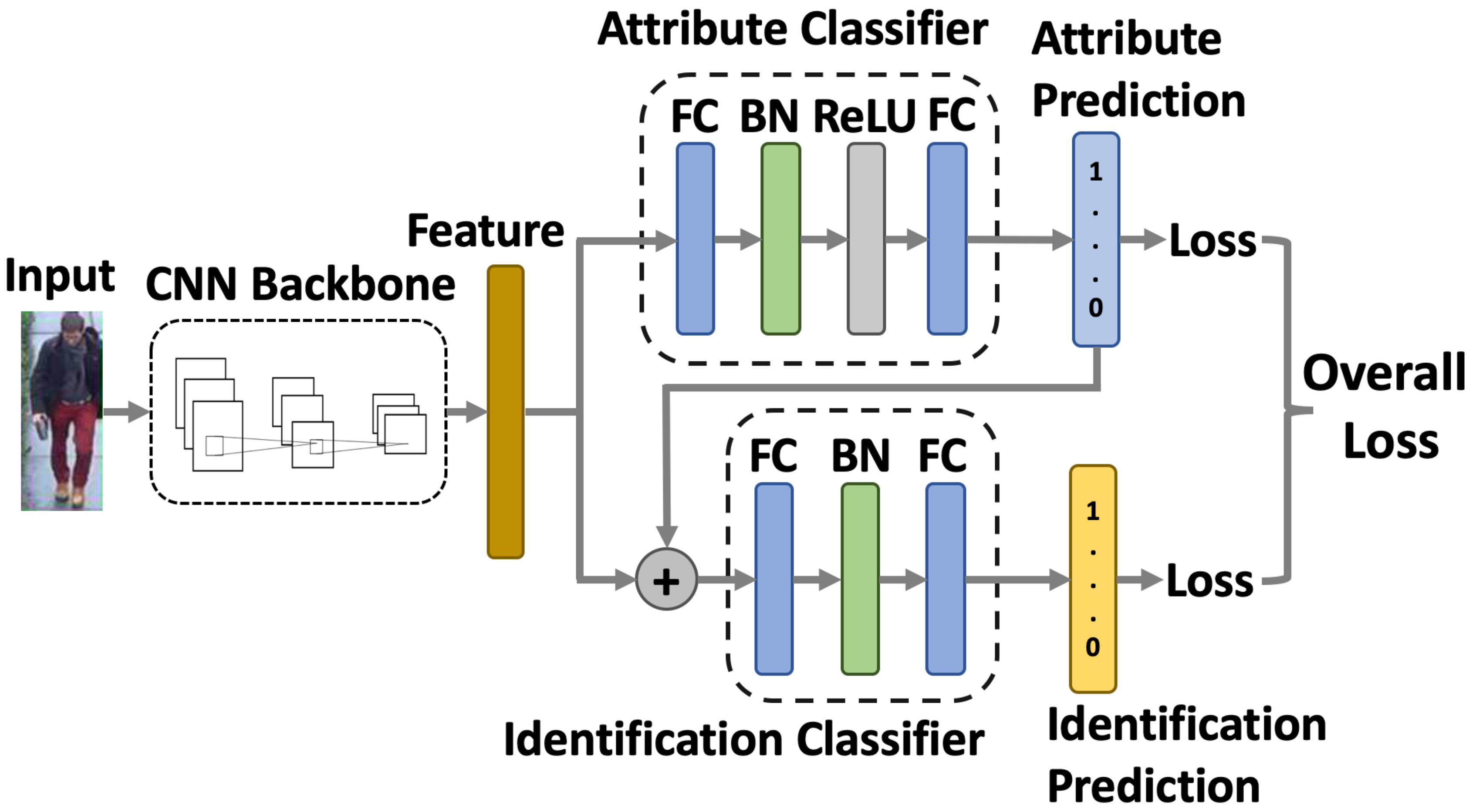}}
\vspace{-3mm}
\caption{Training model}
\vspace{-5mm}
\label{training}
\end{figure}

\section{Module Distribution based on Contextual Multi-Armed Bandits} \label{sec05}

The basic idea of the proposed module distribution algorithm is to adopt an online-learning framework based on the Contextual Multi-Armed Bandit model, as shown in Figure~\ref{ModDistMAB}. In the proposed inference framework, the locations of Modules B and C play a vital role in the latency of online inference requests. We observe that the network latencies, size of gallery and their popularity depend on a {\it context} of the current camera network and the environment it monitors, such as the busyness of a camera, the time (rush hours or holiday hours), and etc. Thus, the proposed online-learning algorithm does not run all the time but re-learns when the context space changes dramatically. Normally, such contexts can be observed by the agents before making decisions. For simplicity, we consider the transmission and processing delays as the contexts of the network. 

\begin{figure}[htbp]
\centerline{\includegraphics[scale=0.28]{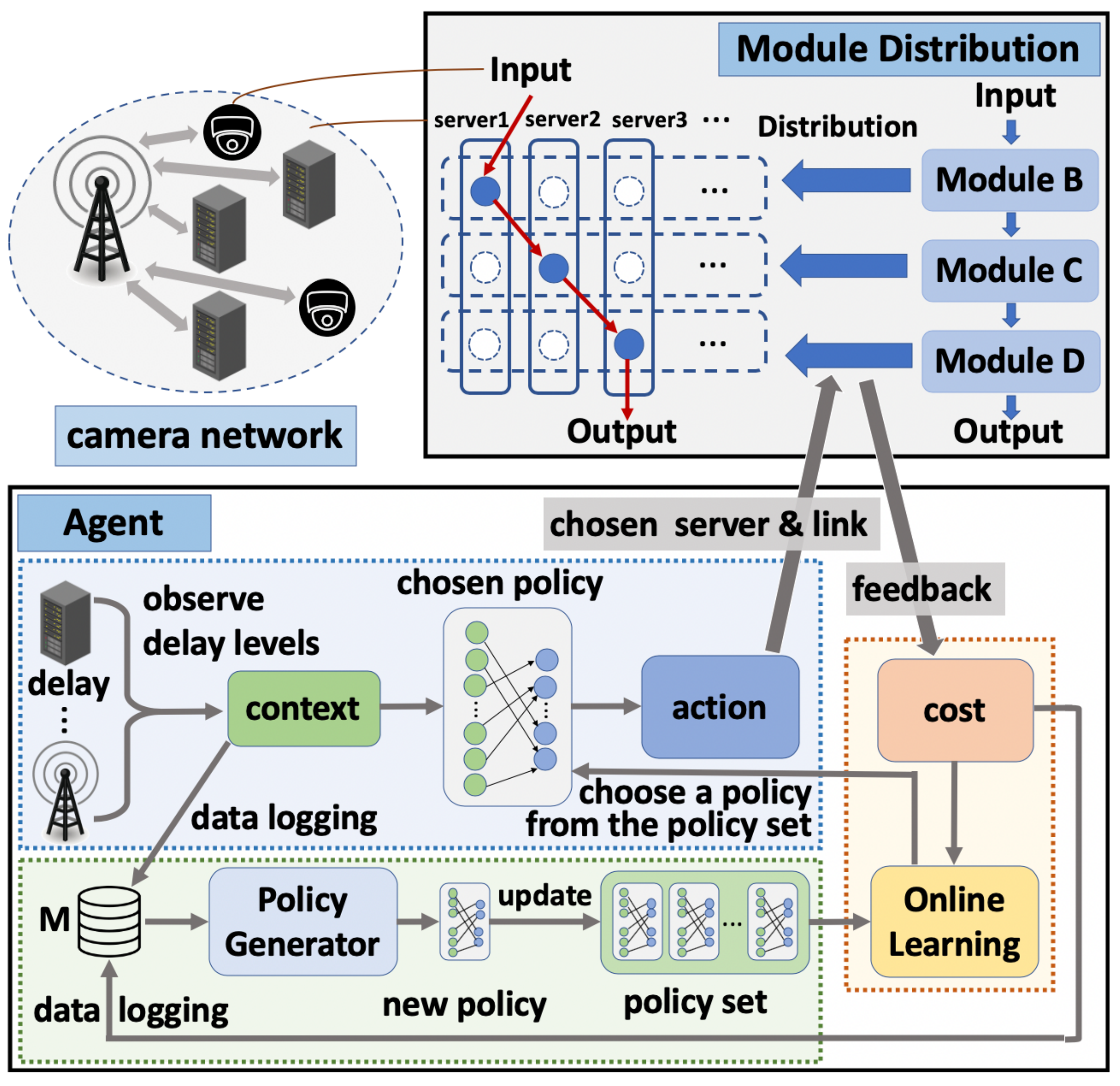}}
\vspace{-3mm}
\caption{The online-learning framework.}
\vspace{-3mm}
\label{ModDistMAB}
\end{figure}

We consider that there is an agent representing each to-be-distributed modules. 
For an edge server $v$ and a link $e$ that connects to $v$, the agent of each module decides whether to assign its module to $v$ and transmit its input data via link $e$.  
An edge server $v$ with a link $e$ that connects to $v$ is considered as an arm, also known as an action. Each of such agents needs to choose an {\it arm} based on the current context of the camera network. For clarity, we define a {\it policy} as a mapping from contexts to arms. Denote by $\Pi$ the set of policies from which an agent chooses its policy.  
We divide the processing and transmission delays into $L$ levels, with each level $l$ ($1 \leq l \leq L$) denoting a fixed range of delay. 
Let $d^{max}_v$ and $d^{min}_v$ be the maximum and minimum values of the delay experienced by request $r_j$ in server $v$ in time slot $t$ of a monitoring period $T$, 
and denote by $d^{max}_e$ and $d^{min}_e$ the maximum and minimum values for of links. 
 The context thus consists of different levels of processing and transmission delays of the camera network. Contexts arrive as independent elements from some fixed contexts set. 
 The arm of each agent thus can be represented as a set of  $\{0, 1\}$, where $1$ denotes the selection of $v$ and $e$ while $0$ denotes they are not selected. 

We use the normalized end-to-end delay as the feedback, denoted by $cost$.
The delay of a certain task is approximately fixed when the computation and transmission conditions are the same. So we assume that the $cost$ with certain arm and context is chosen by a deterministic oblivious adversary before each round. The $cost$ of round t is denoted by $c_{t}(a)$, where the observed context of round $t$ is $x_{t}$ and $a$ is the chosen arm.
The cost of policy $\pi$ after $T$ rounds is:
\begin{equation}
cost(\pi)=\sum\nolimits_{t=1}^{T}c_{t}(\pi(x_{t})).
\end{equation}

In ~\cite{slivkins2019introduction}, an existing benchmark of contextual bandits is the best-response policy: 
\begin{equation}
\pi^{*}(x_{t})=\min\nolimits_{a\in{\mathcal A}}c_{t}(a), 
\end{equation}
where $\mathcal A$ is the set of arms. 
The corresponding regret is: 
\begin{equation}
R^{*}(T)=cost^{OL}-cost^{*}, 
\end{equation}
where the $cost^{OL}$ is the total regret accumulated in the $T$ rounds and $cost^{*} = cost(\pi^{*})$.
The regret is based on the assumption: $\pi^{*}\in\Pi$ for each round t. However, the $\Pi$ in {\bf Algorithm}~~\ref{alg02} {\tt Online Learning} is updated every $I$ rounds. The best-response policy $\pi^{*}$ may not initially be in $\Pi$, but be added through an update. So we now define a more general regret:
\begin{equation}
\label{expectedregret}
R(T)=cost^{OL}-cost^{\sharp}, 
\end{equation}
where $cost^{\sharp}=\sum_{t=1}^{T}\min_{\pi\in\Pi}c_{t}(\pi(x_{t}))$. Note that $R^{*}(T)$ is a special case of $R(T)$ when $\pi^{*}$ is initially in $\Pi$.

The detailed steps of the proposed {\tt ModDistMAB} algorithm are illustrated in {\bf Algorithm}~\ref{alg01}. It contains two sub algorithms {\tt Online Learning} and {\tt Policy Generator}, which are referred to as {\bf Algorithm}~\ref{alg02} and {\bf Algorithm}~\ref{alg03}.

\begin{algorithm}[t]\footnotesize
  \caption{{\tt ModDistMAB}}\label{alg01}
  \begin{algorithmic}[1]
  \REQUIRE ~~ 
  $L$: The number of delay levels;
  ${\mathcal A}$: The set of all arms;
  $M$: The memory that holds the statistical information of the delays of the camera network;
  $N$: Data collection duration;
  $P$: The number of policies;\\
  \STATE Initialize a memory $M$ that holds the statistical information of the delays of the camera network. Initialize an empty policy set $\Pi$; 
  \FOR{each time slot $t\gets 1, \cdots, N$}
  \STATE Observe the processing delay $d_{t}(v)$ for each edge server $v$ and the transmission delay $d_{t}(e)$ for each link $e$;
  \STATE Pick an arm $a_{t}$ from ${\mathcal A}$ uniformly;
  \STATE Observe the cost $c_{t}(a_{t})$ of the chosen arm;
  \STATE Add the data point $(d_{t}(\cdot), a_{t}, c_{t}(a_{t}))$ to the memory $M$;
  \ENDFOR
  \STATE Get the maximum and minimum delays $d^{max}_v$, $d^{min}_v$, $d^{max}_e$ and $d^{min}_e$ from the memory $M$;
  \STATE Generate the context set ${\mathcal X}$ of the camera network with the delay ranges  being divided into L levels;
  \STATE Convert all of the data points $(d_{t}(\cdot), a_{t}, c_{t}(a_{t}))$ to $(x_{t}, a_{t}, c_{t}(a_{t})), x_{t}\in{\mathcal X}$;
  \STATE Call {\bf Algorithm}~~\ref{alg03} Policy Generator with different $\sigma$ and $S$ to get $P$ new policies;
  \STATE Add the $P$ new policies to the policy set $\Pi$; 
  \STATE Online Learning: call {\bf Algorithm}~~\ref{alg02}.
  \end{algorithmic}
\end{algorithm}

\begin{algorithm}[t]\footnotesize
  \caption{{\tt Online Learning}}\label{alg02}
  \begin{algorithmic}[1]
  \REQUIRE ~~
  ${\mathcal X}$: The set of all contexts;
  $\Pi$: The set of all policies;
  ${\mathcal A}$: The set of all arms;
  $M$: The memory that holds the statistical information of the delays of the camera network;
  $I$: The update cycle of policy;
  $\epsilon$: The parameter for weight update. $\epsilon \in (0, \frac{1}{2})$;
  $\gamma$: The probability of picking an arm uniformly at random;\\
  \STATE Initialize the weight as $w_{1}(a)=1$ for each arm a;
  \FOR{each time slot $t\gets 1, \cdots, T$}
  \STATE Observe the context of the camera network $x_{t}$;
  \STATE Randomly generate a float number $n$;
  \IF {$n < \gamma$}
  \STATE Pick an arm $a_{t}$ uniformly at random;
  \STATE Observe the cost $c_{t}(a_{t})$ of the chosen arm;
  \STATE Add data point $(x_{t}, a_{t}, c_{t}(a_{t}))$ to the memory $M$;
  \STATE For each policy $\pi$, $w_{t+1}(\pi)=w_{t}$;
  \ELSE
  \STATE For each policy $\pi$, let $p_{t}(\pi )={w_{t}(\pi )}/({\sum_{{\pi}'\in \Pi}w_{t}({\pi}')});$
  \STATE For each arm $a$, let $q_{t}(a)=\sum\nolimits_{\pi \in \Pi:\pi(x_{t})=a}p_{t}(\pi)$
  \STATE Sample a policy $\pi_{t}$ from distribution $p_{t}(\cdot );$
  \STATE Use the policy $\pi_{t}$ to pick an arm $a_{t}=\pi_{t}(x_{t})$;
  \STATE Observe the cost $c_{t}(a_{t})$ of the chosen arm;
  \STATE Add data point $(x_{t}, a_{t}, c_{t}(a_{t}))$ to the memory $M$;
  \STATE Get the maximum cost $c^{max}$ and the minimum cost $c^{min}$ from $M$;
  \STATE Normalize the current cost:
  $c_{t}(a_{t}) = ({c_{t}(a_{t})-c^{min}})/({c^{max}-c^{min}})$
  \STATE For each arm $a$, define fake costs:
  $\hat{c}_{t}(a)=\begin{cases}
\frac{c_{t}({a_{t})}}{q_{t}(a_{t})} & a =  a_{t} \\ 
 0& otherwise
\end{cases}$;
  \STATE For each policy $\pi$, define fake costs: $\hat{c}_{t}(\pi)=\hat{c}_{t}(\pi(x_{t}))$;
  \STATE For each policy $\pi$, update its weight $w_{t+1}(\pi)=w_{t}(\pi)\cdot(1-\epsilon)^{\hat{c}_{t}({\pi})}$;
  \ENDIF
  \IF {$t \equiv 0 \pmod {I}$}
  \STATE Call {\bf Algorithm}~~\ref{alg03} with different $\sigma$ and $S$ to get $(|\Pi|-1)$ new policies;
  \STATE Keep the policy $\pi\in\Pi$ with the largest $w_{t+1}(\pi)$. Replace the other $(|\Pi|-1)$ old policies with the new generated policies, inheriting the $w$ value.
  \ENDIF
  \ENDFOR

  \end{algorithmic}
\end{algorithm}

\begin{algorithm}[t]\footnotesize
  \caption{{\tt Policy Generator}}\label{alg03}
  \begin{algorithmic}[1]
  \REQUIRE ~~ 
  ${\mathcal A}$: The set of all arms;
  $M$: The memory that holds the statistical information of the delays of the camera network;
  $\sigma$: The parameter in arm weight;
  $S$: The dictionary of training strategy, such as learning rate, epoch; ${\mathcal O}$: The interface of machine learning models for classification task. We used NN in our experiments;\\
  \ENSURE A new policy $\pi$: the trained classification model;
  \STATE Get all data points $(x_{t}, a_{t}, c_{t}(a_{t}))$ from the memory $M$;
  \STATE Let $X'$ be the set of $x_{t}$ in data points;
  \STATE For each context $x \in X'$, find the arm $a$ with the smallest cost $c_{t}(a_{t})$ in data points, named as $a^{x}$
  \STATE For each arm $a \in {\mathcal A}$:

  let $\rho _{a}=\sum_{x\in X':a^{x}=a}\frac{1}{|X'|}$, $w_{a}=exp(\frac{-\rho _{a}}{\sigma ^{2}})$

  \STATE We model the policy as a multi-class classification problem in machine learning. The contexts are the samples and the arms are the predicted classes. The set $(x, a^{x}), x \in X'$ is the training data. Each context $x$ is the input feature and $a^{x}$ is the label. $w$ weights the loss function to balance the distribution of arms;
  \STATE Train the model ${\mathcal O}$ with the strategy $S$.
  \end{algorithmic}

\end{algorithm}

\subsection{Regret analysis}
We now analyze the bounds of the regret of algorithms {\tt ModDistMAB} and {\tt Online Learning} in the following theorems. 

\begin{theorem}\label{th02}
The regret of algorithm {\tt Online Learning} with parameter $\gamma \in  [0,\frac{1}{\sqrt{T}})$ is bounded by $O(\sqrt{T|{\mathcal A}|\ln|\Pi|})$. 
\end{theorem}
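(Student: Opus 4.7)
I would recognize Algorithm~\ref{alg02} as a contextual-bandit variant of EXP4: it maintains multiplicative weights $w_t(\pi)$ over the policy set $\Pi$, on each exploitation round samples a policy $\pi_t$ with probability $p_t(\pi)\propto w_t(\pi)$, selects the arm $a_t=\pi_t(x_t)$, and updates the weights using the importance-weighted cost estimate $\hat c_t$. The uniform-exploration branch (entered with probability $\gamma$) leaves the weights unchanged. Accordingly, I would adapt the classical EXP4 potential-function analysis, treating the uniform-exploration rounds as a small additive perturbation.

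First I would decompose the expected regret $R(T)$ from~(\ref{expectedregret}) into two terms by conditioning on which branch is executed in each round. Since the cost is normalized into $[0,1]$ (line~18 of Algorithm~\ref{alg02}), each exploration round contributes at most $1$ to the per-round regret, and such rounds occur with probability $\gamma$. In expectation this accumulates to at most $\gamma T\le \sqrt{T}$ under the hypothesis $\gamma<1/\sqrt{T}$, which is already absorbed in $O(\sqrt{T|\mathcal A|\ln|\Pi|})$. It then suffices to bound the regret of the at most $T$ exploitation rounds against the comparator $\pi^\sharp$ implicit in~(\ref{expectedregret}).

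On exploitation rounds I would run the standard potential argument on $W_t=\sum_{\pi\in\Pi}w_t(\pi)$. Using $(1-\epsilon)^x\le 1-\epsilon x+(\epsilon x)^2/2$ (valid for $\epsilon x\le 1$), I would upper bound $\ln(W_{T+1}/W_1)\le -\epsilon\sum_{t,\pi}p_t(\pi)\hat c_t(\pi)+(\epsilon^2/2)\sum_{t,\pi}p_t(\pi)\hat c_t(\pi)^2$, and from $W_{T+1}\ge w_{T+1}(\pi^\sharp)$ obtain $\ln(W_{T+1}/W_1)\ge -\epsilon\sum_t\hat c_t(\pi^\sharp)-\ln|\Pi|$. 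Combining and taking expectations, unbiasedness of the fake costs ($\mathbf{E}[\hat c_t(a)]=c_t(a)$) turns the first-order terms into the true algorithm cost and the cost of $\pi^\sharp$, while the key EXP4 identity $\mathbf{E}\bigl[\sum_\pi p_t(\pi)\hat c_t(\pi)^2\bigr]=\sum_a c_t(a)^2\le|\mathcal A|$ bounds the second-order term by $\epsilon^2|\mathcal A|T/2$. Balancing $\epsilon=\sqrt{(\ln|\Pi|)/(|\mathcal A|T)}$ then yields regret of order $\sqrt{T|\mathcal A|\ln|\Pi|}$.

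The main obstacle is that $\hat c_t(a)=c_t(a_t)/q_t(a_t)$ is not uniformly bounded, so the quadratic truncation may fail on rounds where $q_t(a_t)$ is very small. I would handle this as in the EXP4 literature: the identity $\mathbf{E}[\sum_\pi p_t(\pi)\hat c_t(\pi)^2]=\sum_a c_t(a)^2$ already contains a compensating factor of $q_t(a_t)$ that cancels one power of $1/q_t(a_t)$, so the expected second-order contribution stays $O(|\mathcal A|T)$ without any arm-wise lower bound on $q_t$; residual large-deviation contributions can be charged to the $\gamma T$ exploration budget if needed. A secondary subtlety is that $\Pi$ is refreshed every $I$ rounds (lines~23--25), but because the algorithm inherits the weight of the current leader, $\pi^\sharp$ can be tracked across epochs and the refresh does not affect the asymptotic order.
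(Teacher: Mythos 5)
Your proposal follows essentially the same route as the paper: both run the standard EXP4 potential-function argument on $W_t=\sum_{\pi\in\Pi}w_t(\pi)$, use unbiasedness of the importance-weighted costs, bound the second-order term by $|\mathcal A|$ per round in expectation, balance $\epsilon\approx\sqrt{(\ln|\Pi|)/(T|\mathcal A|)}$, and charge the uniform-exploration rounds an additive $\gamma T\le\sqrt{T}$. The only differences are cosmetic (your $(1-\epsilon)^x\le 1-\epsilon x+(\epsilon x)^2/2$ versus the paper's $\alpha=\ln\frac{1}{1-\epsilon}$, $\beta=\alpha^2$), and your closing remarks on the unboundedness of $\hat c_t$ and the periodic refresh of $\Pi$ address subtleties the paper leaves implicit.
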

\begin{proof}
We first introduce three facts:

For all $x > 0$, there exist $\alpha ,\beta \geq 0$, possibly dependent on $\epsilon$:
\begin{equation}\label{fact1}
fact: (1-\epsilon )^{x}<1-\alpha x+\beta x^2.
\end{equation}

For $\epsilon \in(0, {1}/{2})$:
\begin{equation}\label{fact2}
fact: \epsilon <\ln({1}/({1-\epsilon }))<3\epsilon.
\end{equation}

For any $x \in (0,1)$:
\begin{equation}\label{fact3}
fact: \ln(1-x)<-x. 
\end{equation}

We now show the upper bound of $\hat{c}_{t}(\pi)$ and $\hat{c}_{t}(a)$. Let $a=\pi(x_{t})$ be the arm chosen by policy $\pi$. After the normalization, $c_{t}(a)\in \left [ 0,1\right ]$ for each arm $a$. We obtain:
\begin{equation}
\hat{c}_{t}(\pi)=\hat{c}_{t}(a)\leq {c_{t}(a)}/{q_{t}(a)}\leq {1}/{q_{t}(a)}.
\end{equation}

In order to demonstrate that the fake cost $\hat{c}_{t}(a)$ is the unbiased estimate of the true cost $c_{t}(a)$. For each arm $a$, we have:
\begin{equation}
\mathbb{E}[\hat{c}_{t}(a)]=Pr[a=a_{t}]\cdot {c_{t}(a)}/{{q_{t}(a)}}+Pr[a \neq a_{t}]\cdot 0=c_{t}(a).
\end{equation}

We start with the case of $\gamma = 0$, which means that the algorithm always picks an arm by policy. Let $W_{t}=\sum_{\pi\in\Pi}w_{t}(\pi)$, we have:
\begin{equation}
W_{T+1}> w_{T+1}(\pi^{*})=(1-\epsilon )^{cost^{*}}\geq (1-\epsilon )^{cost^{\sharp}}.
\end{equation}

Let $\alpha =\ln (\frac{1}{1- \epsilon}), \beta = \alpha^{2}$ and use the fact in Eq.~(\ref{fact1}), we can get 
\begin{align}
& {W_{t+1}}/{W_{t}} = \sum\nolimits_{\pi\in\Pi}(1-\epsilon )^{\hat{c}_{t}(\pi)}\cdot ({w_{t}(\pi)})/{W_{t}}
\nonumber \\
& \leq \sum\nolimits_{\pi\in\Pi}(1-\alpha \hat{c}_{t}(\pi)+\beta \hat{c}_{t}(\pi)^{2})\cdot p_{t}(\pi)\nonumber\\
&= \sum\nolimits_{\pi\in\Pi}p_{t}(\pi) - \alpha \sum\nolimits_{\pi\in\Pi}p_{t}(\pi)\hat{c}_{t}(\pi)+\beta \sum \nolimits_{\pi\in\Pi} p_{t}(\pi) \hat{c}_{t}(\pi)^{2})\nonumber\\
&=1-\alpha \mathbb{E}[\hat{c}_{t}(\pi)]+\beta \mathbb{E}[\hat{c}_{t}(\pi)^{2}]. \nonumber
\end{align}


To get the upper bound of $\mathbb{E}[\hat{c}_{t}(\pi)^{2}]$, we have:
\begin{equation}
\begin{aligned}
& \mathbb{E}[\hat{c}_{t}(\pi)^{2}]=\sum\limits_{\pi\in\Pi}p_{t}(\pi)\cdot\hat{c}_{t}(\pi)^{2}
=\sum\limits_{a\in{\mathcal A}}\sum\limits_{\pi \in \Pi:\pi(x_{t})=a}p_{t}(\pi)\hat{c}_{t}(\pi)\hat{c}_{t}(\pi)\\
&\leq \sum\limits_{a\in{\mathcal A}}\sum\limits_{\pi \in \Pi:\pi(x_{t})=a}\frac{p_{t}(\pi)}{q_{t}(a)}\hat{c}_{t}(a)
= \sum\limits_{a\in{\mathcal A}}\frac{\hat{c}_{t}(a)}{q_{t}(a)}\sum\limits_{\pi \in \Pi:\pi(x_{t})=a}p_{t}(\pi)\\
&=\sum\nolimits_{a\in{\mathcal A}}\hat{c}_{t}(a).
\end{aligned}
\end{equation}

The mathematical expectation of $\mathbb{E}[\hat{c}_{t}(\pi)^{2}]$ thus is:
\begin{equation}\label{eq18}
\begin{aligned}
& \mathbb{E}(\sum\nolimits_{\pi\in\Pi}p_{t}(\pi)\cdot \hat{c}_{t}(\pi)^{2})
\leq \mathbb{E}(\sum\nolimits_{a\in{\mathcal A}}\hat{c}_{t}(a))\\
&=\sum\nolimits_{a\in{\mathcal A}}\mathbb{E}(\hat{c}_{t}(a))
=\sum\nolimits_{a\in{\mathcal A}}\mathbb{E}(c_{t}(a)) 
\leq |{\mathcal A}|.
\end{aligned}
\end{equation}

According to the fact in Eq.~(\ref{fact3}) and $\alpha \mathbb{E}[\hat{c}_{t}(\pi)] -\beta \mathbb{E}[\hat{c}_{t}(\pi)^{2}] \in \left ( 0,1\right )$, we can derive 
\begin{equation}
\begin{aligned}
&\ln ({W_{t+1}}/{W_{t}})< \ln(1-\alpha \mathbb{E}[\hat{c}_{t}(\pi)]+\beta \mathbb{E}[\hat{c}_{t}(\pi)^{2}])\\
&< -\alpha \mathbb{E}[\hat{c}_{t}(\pi)]+\beta \mathbb{E}[\hat{c}_{t}(\pi)^{2}].
\end{aligned}
\end{equation}

To calculate the expected regret, we need to connect $cost^{OL}$ and $cost^{\sharp}$. By transposing and summing over $t$ on both sides, we have:
\begin{equation}
\begin{aligned}
&\sum _{t\in [T]}(\alpha \mathbb{E}[\hat{c}_{t}(\pi)]-\beta \mathbb{E}[\hat{c}_{t}(\pi)^{2}])=\alpha \mathbb{E}[cost^{OL}]-\beta\sum _{t\in [T]} \mathbb{E}[\hat{c}_{t}(\pi)^{2}]\\
&<- \sum _{t\in [T]}\ln({W_{t+1}}/{W_{t}})
=-\ln\prod_{t\in [T]}{W_{t+1}}/{W_{t}}
=-\ln ({W_{T+1}}/{W_{1}})\\
&=\ln W_{1}-\ln W_{T+1}
<\ln|\Pi|-cost^{\sharp}\ln(1-\epsilon)\\
&=\ln|\Pi|+\alpha \mathbb{E}[cost^{\sharp}]. 
\end{aligned}
\end{equation}

We now calculate the expected regret:
\begin{equation}
\begin{aligned}
& \mathbb{E}[cost^{OL}-cost^{\sharp}]< ({\ln|\Pi|})/{\alpha}+\alpha \sum \nolimits_{t\in[T]}\mathbb{E}[\hat{c}_{t}(\pi)^{2}]\\
&\leq ({\ln|\Pi|})/{\alpha}+\alpha T|{\mathcal A}|
\leq ({\ln|\Pi|})/{\epsilon }+3 \epsilon T|{\mathcal A}|,
\end{aligned}
\end{equation}
where the upper bound of $\mathbb{E}[\hat{c}_{t}(\pi)^{2}]$ is obtained in Eq.~(\ref{eq18}) and the last step is based on the fact in in Eq.~(\ref{fact2}). 
Let $\epsilon =\sqrt{({\ln|\Pi|})/({3T|{\mathcal A}|})}$, and we have
\begin{equation}
\begin{aligned}
\mathbb{E}[R(T)]=\mathbb{E}[cost^{OL}-cost^{\sharp}]< 2\sqrt{3}\sqrt{T|{\mathcal A}|\ln|\Pi|}, 
\end{aligned}
\end{equation}
due to Eq.~(\ref{expectedregret}). 

For the extended case of $\gamma \in  [0,\frac{1}{\sqrt{T}})$, each round has the probability $\gamma$ to pick an arm randomly, contributing at most 1 to the expected regret. So we have:
\begin{align}
& \mathbb{E}[R(T)]< 2\sqrt{3}\sqrt{T(1-\gamma)|{\mathcal A}|\ln|\Pi|}+\gamma T\nonumber\\
&\leq 2\sqrt{3}\sqrt{T|{\mathcal A}|\ln|\Pi|}+\sqrt{T}
\leq O(\sqrt{T|{\mathcal A}|\ln|\Pi|}).\nonumber
\end{align}
\end{proof}

\begin{theorem}\label{th01}
The regret of algorithm {\tt ModDistMAB} is bounded by $O(\sqrt{T |E||V|})$. 
\end{theorem}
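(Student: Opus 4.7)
The plan is to reduce Theorem~\ref{th01} to Theorem~\ref{th02} by accounting separately for the two phases of {\tt ModDistMAB} (the uniform data-collection phase of length $N$ on lines 2--7 of Algorithm~\ref{alg01}, and the subsequent call to {\tt Online Learning}), and then bounding the problem-dependent quantities $|\mathcal{A}|$ and $\ln|\Pi|$ in terms of $|E|$ and $|V|$.

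First, I would formalize the arm set. Each agent's arm corresponds to picking a server $v\in V$ together with an incoming link $e\in E$ that connects to $v$, so $|\mathcal{A}|\le |E|\cdot|V|$ (with equality only in the worst case). The policy set $\Pi$ has cardinality $|\Pi|=P$, a fixed input parameter of the algorithm that does not scale with $T$; therefore $\ln|\Pi|=\ln P=O(1)$ and can be absorbed into the constant of the $O(\cdot)$ notation.

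Next, I would bound the regret contributed by the data-collection phase. Because the normalized cost lies in $[0,1]$, each of the $N$ rounds in lines 2--7 contributes at most $1$ to the regret relative to the best policy $\pi^{\sharp}$, giving a cumulative contribution of at most $N$. Choosing $N=O(\sqrt{T})$ (or any $N$ that is $o(\sqrt{T|E||V|})$) makes this phase's regret a lower-order term. For the remaining $T-N$ rounds, Theorem~\ref{th02} applies verbatim to the call of {\tt Online Learning} on the enlarged policy set, producing a bound of $O(\sqrt{(T-N)|\mathcal{A}|\ln|\Pi|})=O(\sqrt{T|E||V|})$ after substituting the estimates above.

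Summing the two contributions yields $R(T)\le N + O(\sqrt{T|E||V|\ln P}) = O(\sqrt{T|E||V|})$, which proves the theorem. The main delicacy I anticipate is the treatment of the periodic policy-refresh step on lines 23--25 of Algorithm~\ref{alg02}: each refresh replaces $|\Pi|-1$ policies while inheriting weights, so $\Pi$ is not strictly fixed across rounds. I would argue that the regret guarantee of Theorem~\ref{th02} still holds with respect to the weakened benchmark $cost^{\sharp}=\sum_{t=1}^{T}\min_{\pi\in\Pi_t}c_t(\pi(x_t))$ defined in Eq.~(\ref{expectedregret}), because the EXP4-style weight-update argument in the proof of Theorem~\ref{th02} only requires that the best policy seen so far remains in $\Pi$ with a non-negligible weight; the refresh step explicitly preserves the top-weighted policy, so inheritance of its weight guarantees $W_{T+1}\ge w_{T+1}(\pi^{\sharp})$ throughout. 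Once this continuity-of-benchmark subtlety is handled, the rest is a straightforward plug-in of Theorem~\ref{th02}.
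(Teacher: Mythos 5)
Your proposal is correct and follows essentially the same route as the paper: bound the arm set by $|\mathcal{A}|=O(|E||V|)$, treat the generated policy set as having constant size $P$ so that $\ln|\Pi|=O(1)$, charge at most $N$ regret to the uniform data-collection phase, and invoke Theorem~\ref{th02} for the remaining rounds to obtain $O(\sqrt{T|E||V|\ln P}+N)=O(\sqrt{T|E||V|})$. The only differences are cosmetic: the paper additionally motivates the constant-size policy set by first showing that naive enumeration would give $O(K^{L^{|E||V|}})$ policies and an exponentially large bound, and it assumes $N$ is a fixed constant rather than $o(\sqrt{T|E||V|})$, while your remark on the periodic policy-refresh step is a subtlety the paper leaves implicit.
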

\begin{proof}

To obtain the bound of expected regret, we show the size of arm space and policy space. 
First, we know that there are $|V|$ servers in the MEC-enabled camera network and $|E|$ links. Denote by $K$ the number of arms in the arm space. Clearly, we have $K = O(|V||E|)$. 
Second, for the policy space, according to algorithm {\tt ModDistMAB}, the processing and transmission delays are divided into $L$ levels. There are at most $O(L^{|E||V|})$ contexts for all edge servers and links. By enumeration, there are $O(K^{L^{|E||V|}})$ possible policies in $\Pi$. As shown in Theorem~\ref{th02}, we know that the regret bound of {\bf Algorithm}~~\ref{alg02} is $O(\sqrt{T|{\mathcal A}|\ln|\Pi|})$, where $|{\mathcal A}|$ is the size of arm set. Plugging in this $\Pi$, we obtain the regret is $O(\sqrt{T|E||V|L^{|E||V|}\ln(|E||V|)})$. Due to the exponential dependence on $|E||V|$, the regret increases dramatically with the scaling up of the system. In addition, the running time per round of {\bf Algorithm}~~\ref{alg02} is $O(K^{L^{|E||V|}})$, which reflects the algorithm is extremely slow in practice. To solve this problem, the key is to reduce the size of policy set $\Pi$. Instead of getting all possible policies by enumeration, we use {\bf Algorithm}~~\ref{alg03} to generate several suboptimal policies. Obtaining the suboptimal policy set, {\bf Algorithm}~~\ref{alg02} finds the optimal one by online learning. Denote by $P$ the number of generated policies. We obtain the regret bound of {\bf Algorithm}~~\ref{alg02} is $O(\sqrt{T|E||V|\ln P})$. Both regret bound and running time are significantly reduced.

In {\tt ModDistMAB}, the algorithm picks the arm uniformly to collect data for the first $N$ rounds. Each round in this duration contributes at most $1$ to regret. The {\bf Algorithm}~~\ref{alg02} runs in the subsequent $(T-N)$ rounds. So we can get the regret bound
\begin{align}
O(\sqrt{(T-N)|E||V|\log P} + N)= O(\sqrt{T|E||V|}),\nonumber
\end{align}
assuming that $N$ and $P$ are the given constants. 
\end{proof}

\section{Experiments}\label{sec06}

\subsection{Training and test-bed settings}
For the training process, we adopt ResNet-50~\cite{he2016deep} as the backbone, which is pre-trained on ImageNet~\cite{russakovsky2015imagenet}. For pedestrian re-ID, we append a 512-dim fully connected layer, a batch normalization layer and a dropout layer after the pool5 layer~\cite{lin2019improving}, without applying the ReLU activation function. 
For the attribute recognition, we use a similar structure as re-id part. The difference is that we append a ReLU layer after batch normalization layer. 
For the training strategy, the number of epochs is 80. The batch size is 32. The learning rate is 0.02 and warm up in 10 epochs. The input image will be resized to 384$\times$128 and random erasing is applied. 

In our test-bed, we use two Raspberry Pi 4B boards (4GB RAM) with cameras, as shown in Figure~\ref{testbed}~(a). We adopt the OpenVINO toolkit and use a pre-trained pedestrian detector in it. Then we use Intel Neural Compute Stick 2 plugged in USB ports to accelerate the inference of CNN. Raspberry Pi will send the detected pedestrian images to the edge nodes for further inference. The test-bed also consists of four edge servers with GPU (2080 Ti) that are interconnected by five hardware switches, as shown in Figure~\ref{testbed}(b). 

\begin{figure}[htbp]
\centerline{\includegraphics[scale=0.08]{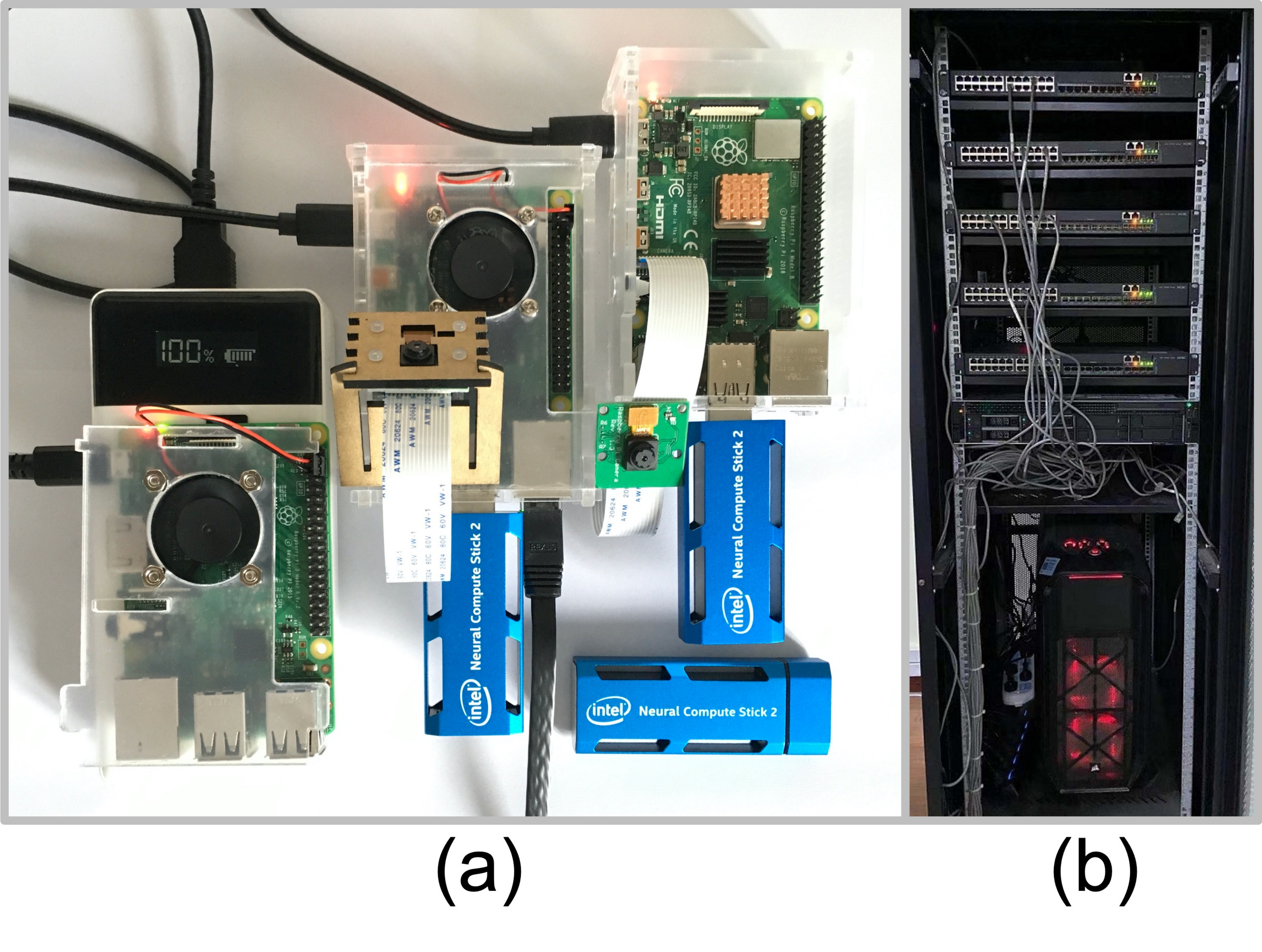}}
\vspace{-5mm}
\caption{A test-bed for the MEC-enabled camera network.}
\vspace{-6mm}
\label{testbed}
\end{figure}

\subsection{Datasets and baseline algorithm}
The accuracy is evaluated on two pedestrian re-ID datasets with attribute annotation: Market-1501~\cite{zheng2015scalable} and DukeMTMC-reID~\cite{zheng2017unlabeled}. 
We use video sequences from MOTChallenge~\cite{milan2016mot16}~\cite{leal2015motchallenge} and evaluate our distributed algorithm for acceleration on them, which are filmed with static and elevated cameras in unconstrained environments. 
For the sake of fairness, we should compare our method with the similar one that considering the joint training of attribute recognition and person re-ID.
So we choose an existing study by~\cite{lin2019improving}, which is referred to as Baseline. 
\begin{table}[h]
 \resizebox{0.86\textwidth}{!}{
 \centering
 \begin{minipage}{\textwidth}
\begin{tabular}{lllll}
\hline
Methods  & Rank-1 & Rank-5 & Rank-10 & mAP  \\
\hline
Baseline        & 87.04 & 95.10 & 96.42 & 66.89  \\
Ours            & 93.05 & 97.57 & 98.34 & 81.60  \\
$Ours_{framejunk}$  & {\bfseries 96.44} & {\bfseries98.72} & {\bfseries99.23} & {\bfseries84.30}  \\
\hline
\end{tabular}
\end{minipage}}
\vspace{0mm}
\caption{CMC and mAP on Market-1501}
\vspace{-10mm}
\label{reid_market}
\end{table}

\begin{table}[h]
\centering
 \resizebox{0.86\textwidth}{!}{\begin{minipage}{\textwidth}
\begin{tabular}{lllll}
\hline
Methods  & Rank-1 & Rank-5 & Rank-10 & mAP  \\
\hline
Baseline        & 73.92 & - & - & 55.56  \\
Ours            & 85.37 & 92.55 & 94.84 & 70.75  \\
$Ours_{framejunk}$  & {\bfseries96.59} & {\bfseries98.79} & {\bfseries99.10} & {\bfseries79.83}  \\
\hline
\end{tabular}
\end{minipage}}
\vspace{0mm}
\caption{CMC and mAP on DukeMTMC-reID}
\vspace{-10mm}
\label{reid_duke}
\end{table}

\subsection{Model accuracy results}\label{sec6.3}
The Cumulative Matching Characteristic (CMC) curve and the mean average precision (mAP) are adopted for pedestrian re-ID evaluation. In the previous CMC calculation, if the gallery image is under the same camera as the query image, it is considered as junk image. However, in our application, images from the same camera are used as effective information to identify the current query image. So we only consider the gallery images which are from the same frame as the query image as junk images. Our proposed CNN model based on this new evaluation metric is referred to as $Ours_{framejunk}$, while the proposed model based on conventional CMC is denoted by {\it Ours}.
The results on Market-1501 and DukeMTMC-reID are shown in Tables~\ref{reid_market} and~\ref{reid_duke}, respectively. It can be seen from the tables that $Ours$ has the improvement of 6.0\% in Rank-1 and 14.7\% in mAP compared with the Baseline approach. The reason is that higher attribute accuracy promotes the accuracy of re-ID. Also, we adopt a novel training strategy that is suitable for person re-ID. $Ours_{framejunk}$ has an accuracy of up to 96.44\% for Rank-1, indicating the identification accuracy of our system is up to 96.44\%. The accuracy improvement of $Ours_{framejunk}$ compared with $Ours$ confirms that the images from the same camera are the conducive information for identification. 

For the attribute recognition, we test the classification accuracy for each attribute. For geographically distributed cameras fusion, we fuse the prediction output of the same person in different images, which is referred to as $Ours_{fused}$. The results on Market-1501 are shown in the table~\ref{attribute_market}. $Ours$ is much better than Baseline. The reason is that we adopt a single classifier for all the attributes, which is better than a classifier for each attribute. This allows the relationship among attributes to be learnt simultaneously. In addition, the accuracy improvement of $Ours_{fused}$ compared with $Ours$ means that cross-camera can improve the performance of attribute recognition.

\begin{figure}[htbp]
\centerline{\includegraphics[scale=0.22]{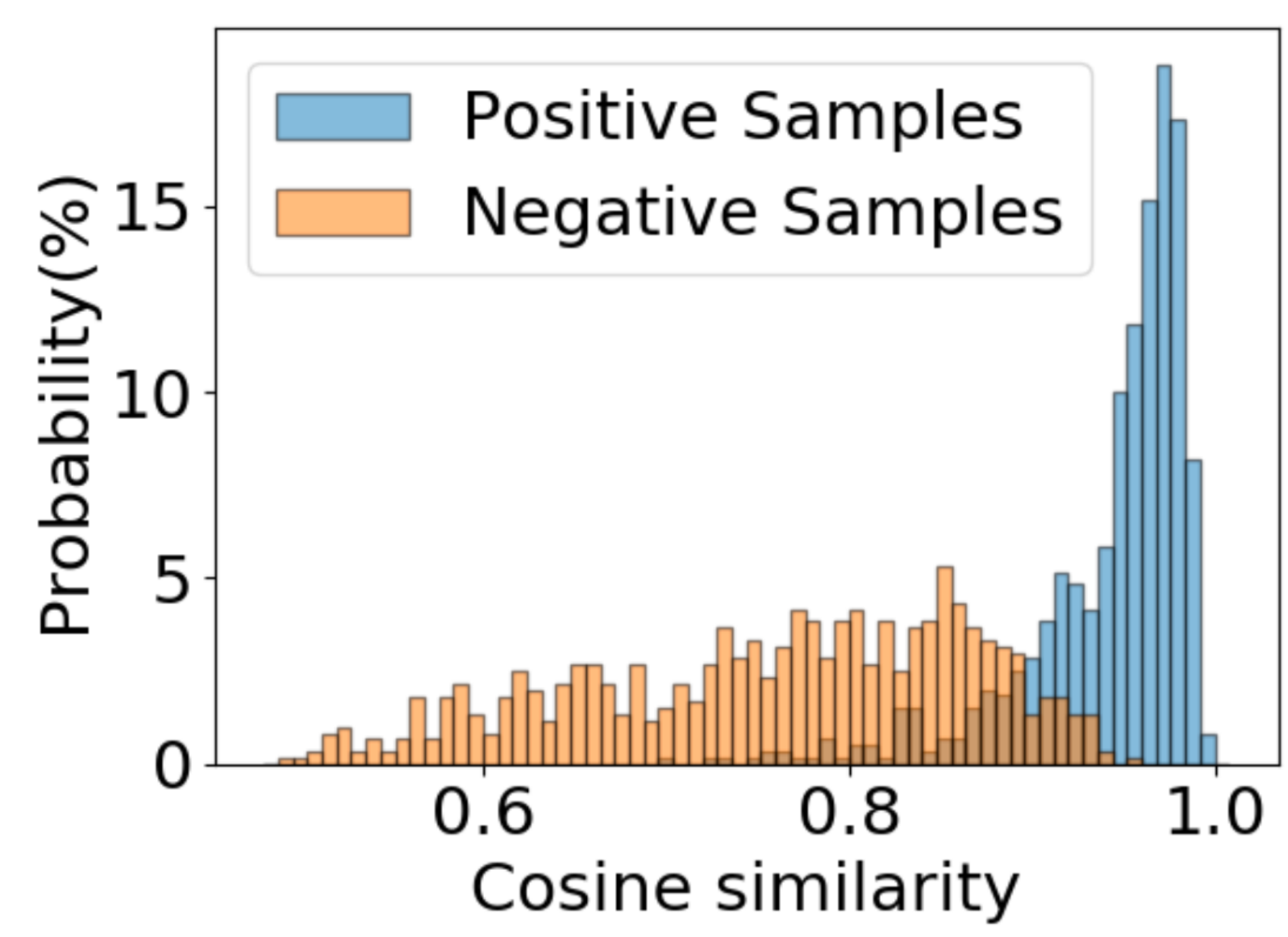}}
\vspace{-3mm}
\caption{The maximum similarity between query images and the gallery}
\vspace{-4mm}
\label{threshold}
\end{figure}

To distinguish whether the current query person exists in the gallery, we make a statistic on the dataset, inspired by~\cite{DBLP:journals/corr/abs-1711-05535}. For each identity in the gallery, we select one image as a positive query sample. For each identity not in the gallery, we select one image as a negative query sample. 
The similarity results are shown in Figure~\ref{threshold}. 
From the results, we can see that when the threshold as $0.9$ the negative and positive samples can be separated sufficiently. We thus set the threshold $\varepsilon$ to $0.9$ in Module D. 

 
\begin{table}
\centering
\resizebox{0.86\textwidth}{!}{\begin{minipage}{\textwidth}
\begin{tabular}{llllllll}
\hline
          &gender&age&hair&L.slv&L.low&S.clth\\
\hline
Baseline        &88.9 &88.6 &84.4 &93.6 &93.7 &92.8 \\
Ours            &93.1 &93.8 &89.3 &93.5 &94.1 &92.6\\
$Ours_{fused}$  &{\bfseries94.5} &{\bfseries95.2} &{\bfseries90.3} &{\bfseries94.0} &{\bfseries95.5} &{\bfseries93.5}\\
\hline
        &B.pack&H.bag&bag&hat&C.up&C.low&Avg\\
\hline
Baseline        &84.9 &90.4 &76.4 &97.1 &74.0 &73.8 &86.6\\
Ours            &87.7 &89.8 &78.5 &{\bfseries97.7} &95.4 &94.5 &91.7\\
$Ours_{fused}$  &{\bfseries91.3} &{\bfseries89.9} &{\bfseries81.7} &97.3 &{\bfseries96.0} &{\bfseries95.0} &{\bfseries92.9}\\
\hline
\end{tabular}
\end{minipage}}
\caption{Attribute recognition accuracy on Market1501-attribute}
\vspace{-10mm}
\label{attribute_market}
\end{table}

\subsection{Testbed experiments}
To evaluate the effectiveness, we compare the proposed {\tt ModDistMAB} algorithm with the other four, as shown in the Figure ~\ref{inference_delay}. In order for the algorithms to converge, we ensure that no other tasks except for our modules are being processed in each edge server and no other data is being transferred in each link. It means that the network environment is static and our modules are the only variables. The {\tt Fixed} algorithm permanently assigns all the modules to the same edge server with the strongest computing power. We can see that this method performs worst. The reason is that the chosen edge server is overloaded, causing prohibitive huge processing overhead. The {\tt Greedy} algorithm always selects the edge server that can achieve the minimum processing latency, based on the observations of the current network context and the historical information. However, the {\tt Greedy} algorithm do not consider the transmission delay, so it converges to a bad delay. {\tt ModDistMAB without policy updating} always uses the initially generated policy set $\Pi$. It can be seen from the Figure~\ref{inference_delay}, its delay is hardly reduced and converges to a large value. We found that the context space is small because of the static network. During the data collection phase, every possible context is recorded. It means that the policy training set completely covers the feature space, so the classification in {\bf Algorithm}~\ref{alg03} does not predict, but records the optimal action before. Although different training strategies are adopted, the initial $P$ policies still have the same mapping relationship. Therefore, the online learning algorithm does not work, and the delay does not decrease with running. Besides, due to insufficient data collection, there are many wrong labels in {\bf Algorithm}~\ref{alg03}. The delay is large because the policies make many mistakes. {\tt ModDistMAB without online learning} only trains one policy based on the constantly updated data and uses it. Because online learning does not work in the static network, its performance is similar to {\tt ModDistMAB}, converging to a small delay.

However, in practice, the network is dynamic and must have other tasks. The context space is large in the dynamic network. So the policies generated by machine learning will vary greatly, which means the online learning algorithm works. We randomly add other computation and transmission tasks to the Testbed to simulate the dynamic network. As shown in Figure ~\ref{dynamic_delay}, we can see that {\tt ModDistMAB} is always better than {\tt ModDistMAB without online learning} after going through 1900 rounds. The reason is that {\tt ModDistMAB without online learning} only uses a suboptimal policy generated by {\bf Algorithm}~\ref{alg03}, while {\tt ModDistMAB} algorithm will further find out  the optimal policy from the set of suboptimal policies through online learning. It shows that the {\tt ModDistMAB} algorithm is the most efficient one in practice.

\begin{figure}[htbp]
\vspace{-3mm}
\centerline{\includegraphics[scale=0.23]{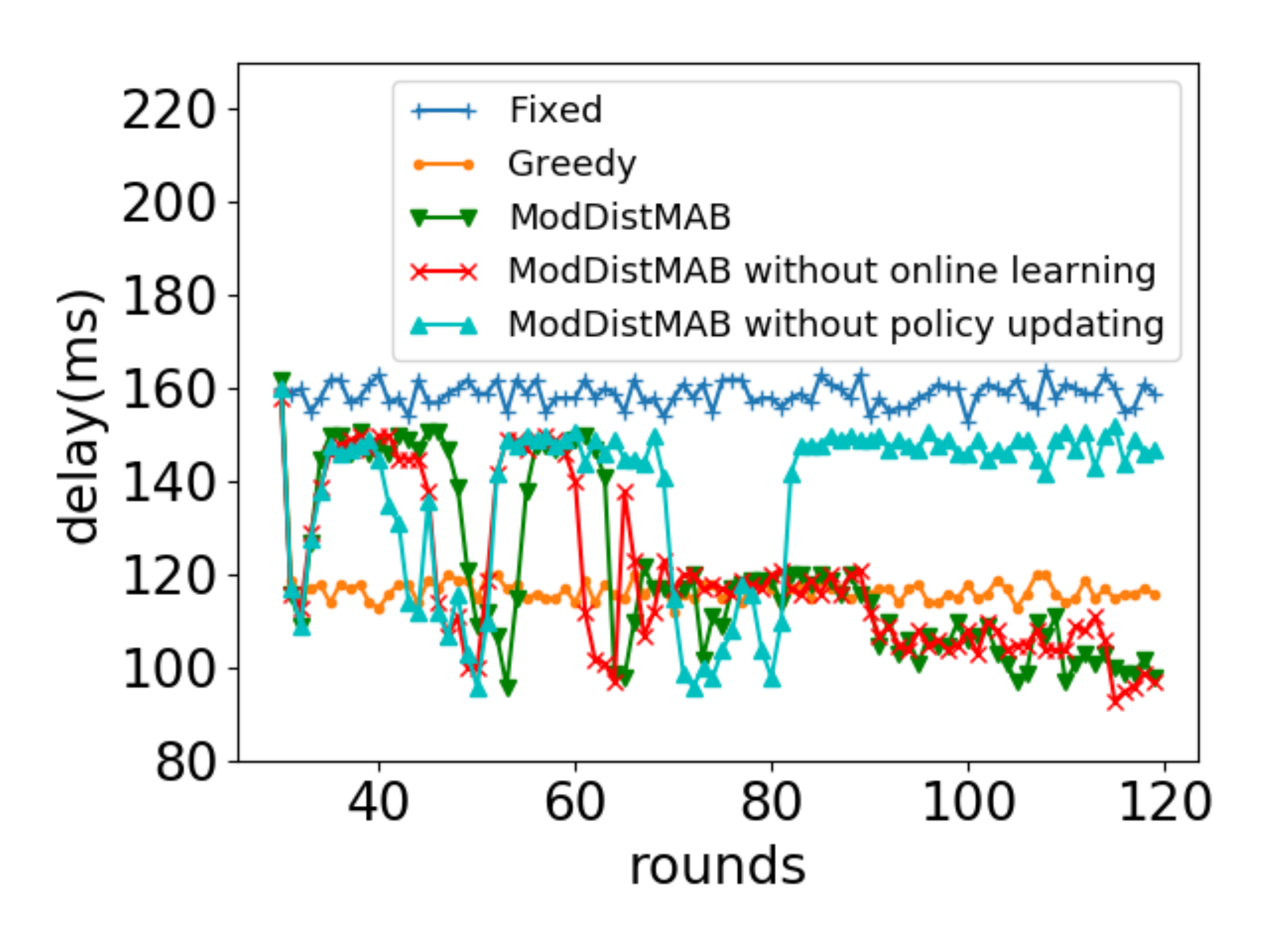}}
\vspace{-5mm}
\caption{The performance of algorithms 
in the static network with a data collection duration of 30. }
\vspace{-8mm}
\label{inference_delay}
\end{figure}

\begin{figure}[htbp]
\centerline{\includegraphics[scale=0.23]{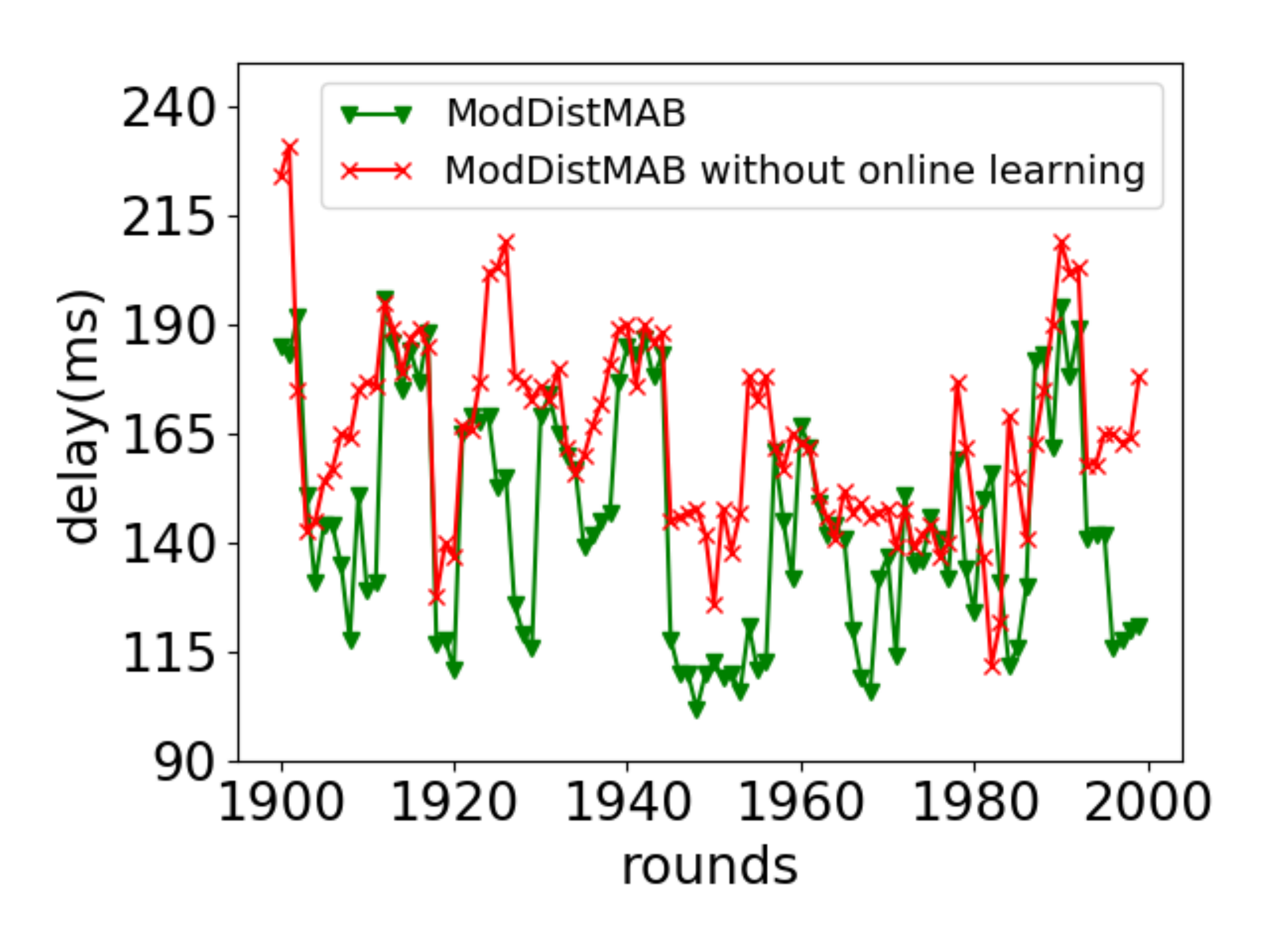}}
\vspace{-5mm}
\caption{The performance of algorithms {\tt ModDistMAB} and {\tt ModDistMAB} without online learning in the dynamic network. In practice, 1900 rounds may take only 6 minutes and the policy may be trained in earlier rounds. }
\vspace{-4mm}
\label{dynamic_delay}
\end{figure}

\section{Conclusions}\label{sec07}
In this paper, we studied the problem of real-time pedestrian attribute recognition and re-ID in an MEC-enabled camera network. We first proposed a novel distributed inference framework with a set of distributed modules jointly considering the attribute recognition and person re-ID. We also devised an algorithm for the module distributions by proposing a novel Contextual Multi-Armed Bandit model, to address the network uncertainties of an MEC-enabled camera network. We then evaluated the performance of the proposed distributed inference framework and algorithm by both simulations with real datasets and system implementation. Results show that the accuracy of the distributed inference framework is up to 96.6\% for identification and 92.9\% for attribute recognition, and the inference delay is at least 40.6\% lower than algorithm {\tt Fixed}. 

\bibliographystyle{ACM-Reference-Format}
\bibliography{PR-MEC}

\end{document}